\pgfplotsset{compat=newest}
\newcommand\expect[1]{\mathbb{E}\left[ #1 \right]}
\theoremstyle{plain}
\newtheorem{theorem}{Theorem}[section]
\theoremstyle{definition}
\theoremstyle{remark}
\definecolor{myblue}{cmyk}{10,0,0,0}
\icmltitlerunning{Decentralized Online Bandit Optimization on Directed Graphs with Regret Bounds}
\begin{document}

\twocolumn[
\icmltitle{Decentralized Online Bandit Optimization on Directed Graphs with Regret Bounds}


\author{Johan Östman, Ather Gattami \& Daniel Gillblad \\
AI Sweden\\
Gothenburg, Sweden\\
\texttt{\{johan.ostman, ather.gattami, daniel.gillblad\}@ai.se} 
}
\icmlsetsymbol{equal}{*}

\begin{icmlauthorlist}
\icmlauthor{Johan \"Ostman}{aiswe}
\icmlauthor{Ather Gattami}{aiswe}
\icmlauthor{Daniel Gillblad}{aiswe}
\end{icmlauthorlist}

\icmlaffiliation{aiswe}{AI Sweden, Gothenburg, Sweden}

\icmlcorrespondingauthor{Johan \"Ostman}{johan.ostman@ai.se}

\icmlkeywords{Machine Learning, ICML}

\vskip 0.3in
]


 
\printAffiliationsAndNotice{}  

\begin{abstract}
We consider a decentralized multiplayer game, played over $T$ rounds, with a leader-follower hierarchy described by a directed acyclic graph. 
For each round, the graph structure dictates the order of the players and how players observe the actions of one another.
By the end of each round, all players receive a joint bandit-reward based on their joint action that is used to update the player strategies towards the goal of minimizing the joint pseudo-regret.
We present a learning algorithm inspired by the single-player multi-armed bandit problem and show that it achieves sub-linear joint pseudo-regret in the number of rounds for both adversarial and stochastic bandit rewards.
Furthermore, we quantify the cost incurred due to the decentralized nature of our problem compared to the centralized setting. 
\end{abstract}

\section{Introduction}

Decentralized multi-agent online learning concerns agents that, simultaneously, learn to behave over time in order to achieve their goals.
Compared to the single-agent setup, novel challenges are present as agents may not share the same objectives, the environment becomes non-stationary, and information asymmetry may exist between agents~\citep{Yang20}.
Traditionally, the multi-agent problem has been addressed by either relying on a central controller to coordinate the agents' actions or to let the agents learn independently.
However, access to a central controller may not be realistic and independent learning suffers from convergence issues~\citep{Kaiqing19}.
To circumvent these issues, a common approach is to drop the central coordinator and allow information exchange between agents~\citep{Kaiqing18, Kaiqing19, Cesa21}.

Decision-making that involves multiple agents is often modeled as a game and studied under the lens of game theory to describe the learning outcomes.\footnote{The convention is to use agents in learning applications and players in game theoretic applications, we shall use the game-theoretic nomenclature in the remainder of the paper.}
Herein, we consider games with a leader-follower structure in which players act consecutively.
For two players, such games are known as Stackelberg games~\citep{Hicks35}.
Stackelberg games have been used to model diverse learning situations such as airport security~\citep{Balcan15}, poaching~\citep{Sessa20}, tax planning~\citep{Zheng20}, and generative adversarial networks~\citep{Moghadam21}. 
In a Stackelberg game, one is typically concerned with finding the Stackelberg equilibrium, sometimes called Stackelberg-Nash equilibrium, in which the leader uses a mixed strategy and the follower is best-responding.
A Stackelberg equilibrium may be obtained by solving a bi-level optimization problem if the reward functions are known~\citep{Schafer20, Aussel20} or, otherwise, it may be learnt via online learning techniques~\citep{Bai21, Zhong21}, e.g., no-regret algorithms~\citep{Shalev12, Deng19, Goktas22}.

No-regret algorithms have emerged from the single-player multi-armed bandit problem as a means to alleviate the exploitation-exploration trade-off~\citep{Bubeck12}.
An algorithm is called no-regret if the difference between the cumulative rewards of the learnt strategy and the single best action in hindsight is sublinear in the number of rounds~\citep{Shalev12}.
In the multi-armed bandit problem, rewards may be adversarial (based on randomness and previous actions), oblivious adversarial (random), or stochastic (independent and identically distributed) over time~\citep{Auer02}.
Different assumptions on the bandit rewards yield different algorithms and regret bounds.
Indeed, algorithms tailored for one kind of rewards are sub-optimal for others, e.g., the \textsc{Exp3} algorithm due to~\citet{Auer02} yields the optimal scaling for adversarial rewards but not for stochastic rewards.
For this reason, best-of-two-worlds algorithms, able to optimally handle both the stochastic and adversarial rewards, have recently been pursued and resulted in algorithms with close to optimal performance in both settings~\citep{Auer16, Wei18, Zimmert21}.
Extensions to multiplayer multi-armed bandit problems have been proposed in which players attempt to maximize the sum of rewards by pulling an arm each, see, e.g.,~\citep{Kalathil14, Bubeck21}.

No-regret algorithms are a common element also when analyzing multiplayer games.
For example, in continuous two-player Stackelberg games, the leader strategy, based on a no-regret algorithm, converges to the Stackelberg equilibrium if the follower is best-responding~\citep{Goktas22}.
In contrast, if also the follower adopts a no-regret algorithm, the regret dynamics is not guaranteed to converge to a Stackelberg equilibrium point~\citep[Ex.~3.2]{Goktas22}.
In \citep{Deng19}, it was shown for two-player Stackelberg games that a follower playing a, so-called, mean-based no-regret algorithm, enables the leader to achieve a reward strictly larger than the reward achieved at the Stackelberg equilibrium.
This result does, however, not generalize to $n$-player games as demonstrated by~\citet{Andrea22}.
Apart from studying the Stackelberg equilibrium, several papers have analyzed the regret.
For example,~\citet{Sessa20} presented upper-bounds on the regret of a leader, employing a no-regret algorithm, playing against an adversarial follower with an unknown response function.
Furthermore, Stackelberg games with states were introduced by~\citet{Lauffer22} along with an algorithm that was shown to achieve no-regret.

As the follower in a Stackelberg game observes the leader's action, there is information exchange.
A generalization to multiple players has been studied in a series of papers~\citep{Cesa16, Cesa20, Cesa21}.
In this line of work, players with a common action space form an arbitrary graph and are randomly activated in each round.
Active players share information with their neighbors by broadcasting their observed loss, previously received neighbor losses, and their current strategy.
The goal of the players is to minimize the network regret, defined with respect to the cumulative losses observed by active players over the rounds. 
The players, however, update their strategies according to their individually observed loss.
Although we consider players connected on a graph, our work differs significantly from~\citep{Cesa16, Cesa20, Cesa21}, e.g., we allow only actions to be observed between players and the players update their strategies based on a common bandit reward rather than an individual reward.

\paragraph{Contributions:}
We introduce the \emph{joint pseudo-regret}, defined with respect to the cumulative reward where all the players observe the same bandit-reward in each round.
We provide an online learning-algorithm for general consecutive-play games that relies on no-regret algorithms developed for the single-player multi-armed bandit problem.
The main novelty of our contribution resides in the joint analysis of players with coupled rewards where we derive upper bounds on the joint pseudo-regret and prove our algorithm to be no-regret in the adversarial setting.
Furthermore, we quantify the penalty incurred by our decentralized setting in relation to the centralized setting.

\section{Problem formulation}\label{sec:prob}

In this section, we formalize the consecutive structure of the game and introduce the joint pseudo-regret that will be used as a performance metric throughout.
We consider a decentralized setting where, in each round of the game, players pick actions consecutively.
The consecutive nature of the game allows players to observe preceding players' actions and may be modeled by a DAG.
For example, in Fig.~\ref{fig:gen_stack}, a seven-player game is illustrated in which player $1$ initiates the game and her action is observed by players $2$, $5$, and $6$.
The observations available to the remaining players follow analogously.
Note that for a two-player consecutive game, the DAG models a Stackelberg game.

We let $\mathcal{G}=(\mathcal{V}, \mathcal{E})$ denote a DAG where $\mathcal{V}$ denotes the vertices and $\mathcal{E}$ denotes the edges.
For our setting, $\mathcal{V}$ constitutes the $n$ different players and $\mathcal{E}=\{(j,i): j\rightarrow i, j\in \mathcal{V}, i\in\mathcal{V}\}$ describes the observation structure where $j \rightarrow i$ indicates that player $i$ observes the action of player $j$.
Accordingly, a given player $i\in\mathcal{V}$ observes the actions of its direct parents, i.e., players $j\in\mathcal{E}_i=\{k:(k,i)\in\mathcal{E}\}$.
Furthermore, each player $i\in\mathcal{V}$ is associated with a discrete action space $\mathcal{A}_i$ of size $A_i$.
We denote by $\pi_i(t)$, the mixed strategy of player $i$ over the action space $\mathcal{A}_i$ in round $t\in[T]$ such that $\pi_i(t) = a$ with probability $p_{i,a}$ for $a\in \mathcal{A}_i$.
In the special case when $p_{i,a} = 1$ for some $a\in \mathcal{A}_i$, the strategy is referred to as pure.
Let $\mathcal{A}_{\mathcal{B}}$ denote the joint action space of players in a set $\mathcal{B}$ given by the Cartesian product $\mathcal{A}_{\mathcal{B}} = \prod_{i\in\mathcal{B}} \mathcal{A}_i$.
If a player $i$ has no parents, i.e., $\mathcal{E}_i=\emptyset$, we use the convention $\vert \mathcal{A}_{\mathcal{E}_i}\vert=1$.

We consider a collaborative setting with bandit rewards given by a mapping $r_t: \mathcal{A}_{\mathcal{V}} \rightarrow [0,1]$ in each round $t\in [T]$.
The bandit rewards are assumed to be adversarial.
Let $\mathcal{C}$ denote a set of cliques in the DAG~\citep[Def.~2.13]{Koller09} and let $\mathcal{N}_k \in \mathcal{C}$ for $k\in[\vert \mathcal{C} \vert]$ denote the players in the $k$th clique in $\mathcal{C}$ with joint action space $\mathcal{A}_{\mathcal{N}_k}$ such that $\mathcal{N}_k\cap\mathcal{N}_j=\emptyset$ for $j\neq k$.
For a joint action $\mathbf{a}(t)\in \mathcal{A}_{\mathcal{V}}$, we consider bandit rewards given by a linear combination of the clique-rewards as
\begin{equation}
    r_t(\mathbf{a}(t)) = \sum_{k=1}^{\vert \mathcal{C} \vert} \beta_k r_t^k(P^k(\mathbf{a}(t))),
    \label{eq:struct_reward}
\end{equation}
where $r_t^k:\mathcal{A}_{\mathcal{N}_k} \rightarrow [0,1]$, $\beta_k\geq0$ is the weight of the $k$th clique reward such that $\sum_{k=1}^{\vert \mathcal{C} \vert} \beta_k=1$, and $P^k(\mathbf{a}(t))$ denotes the joint action of the players in $\mathcal{N}_k$.
As an example, Fig.~\ref{fig:gen_mult_stack} highlights the cliques $\mathcal{C}=\{\{2,3,4\}, \{1,5\}, \{6\}, \{7\}\}$ and we have, e.g., $\mathcal{N}_1=\{2,3,4\}$, and $P^1(\mathbf{a}(t)) = (a_2(t), a_3(t), a_4(t))$.
Note that each player influences only a single term in the reward~(\ref{eq:struct_reward}).

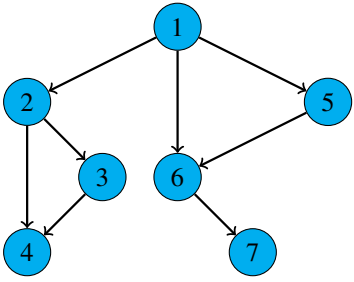
\begin{figure}[t!]
     \centering

        \begin{tikzpicture}
        \node[draw, circle, fill=myblue] (1) at (0,0) {1};
        \node[draw, circle, fill=myblue] (2) at (-2,-1) {2};
        \node[draw, circle, fill=myblue] (6) at (-1, -2) {3};
        \node[draw, circle, fill=myblue] (7) at (-2, -3) {4};        
        \node[draw, circle, fill=myblue] (3) at (2,-1) {5};
        \node[draw, circle, fill=myblue] (4) at (0,-2) {6};
        \node[draw, circle, fill=myblue] (5) at (1, -3) {7};

        \draw[->, line width = 0.3mm]  (1) to node {} (2);
        \draw[->, line width = 0.3mm]  (1) to node {} (3);
        \draw[->, line width = 0.3mm]  (1) to node {} (4);
        \draw[->, line width = 0.3mm]  (2) to node {} (6);
        \draw[->, line width = 0.3mm]  (2) to node {} (7);
        \draw[->, line width = 0.3mm]  (3) to node {} (4);
        \draw[->, line width = 0.3mm]  (4) to node {} (5);
        \draw[->, line width = 0.3mm]  (6) to node {} (7);
        
        \end{tikzpicture}
        
        \caption{A game with seven players.}
        \label{fig:gen_stack}
\end{figure}
     
\begin{figure}[t!]
         \centering
        \begin{tikzpicture}
        
            \node[shape=coordinate] (c1) at (0,0) {} ;
            \node[shape=coordinate] (c2) at (-2,-1) {} ;
            \node[shape=coordinate] (c3) at (-1, -2) {} ;
            \node[shape=coordinate] (c4) at (-2, -3) {} ;
            \node[shape=coordinate] (c5) at (2,-1) {} ;
            \node[shape=coordinate] (c6) at (0,-2) {} ;
            \node[shape=coordinate] (c7) at (1, -3) {} ;
            
            \path[draw,fill=green!50,opacity=.5] (c2) to (c3) to (c4) to (c2);
            \draw[-, line width = 1.3mm, draw =red!50,opacity=.5]  (1) to node {} (3);
            \fill[ circle, fill=orange!50, opacity=0.5] (0,-2) circle (12pt);
            \fill[ circle, fill=purple!50, opacity=0.5] (1,-3) circle (12pt);
    
            \node[draw, circle, fill=myblue] (1) at (0,0) {1};
            \node[draw, circle, fill=myblue] (2) at (-2,-1) {2};
            \node[draw, circle, fill=myblue] (6) at (-1, -2) {3};
            \node[draw, circle, fill=myblue] (7) at (-2, -3) {4};        
            \node[draw, circle, fill=myblue] (3) at (2,-1) {5};
            \node[draw, circle, fill=myblue] (4) at (0,-2) {6};
            \node[draw, circle, fill=myblue] (5) at (1, -3) {7};

        \draw[->, line width = 0.3mm]  (1) to node {} (2);
        \draw[->, line width = 0.3mm]  (1) to node {} (3);
        \draw[->, line width = 0.3mm]  (1) to node {} (4);
        \draw[->, line width = 0.3mm]  (2) to node {} (6);
        \draw[->, line width = 0.3mm]  (2) to node {} (7);
        \draw[->, line width = 0.3mm]  (3) to node {} (4);
        \draw[->, line width = 0.3mm]  (4) to node {} (5);
        \draw[->, line width = 0.3mm]  (6) to node {} (7);

        \end{tikzpicture}
        
        \caption{Colored cliques comprising the bandit reward.}
        \label{fig:gen_mult_stack}
\end{figure}
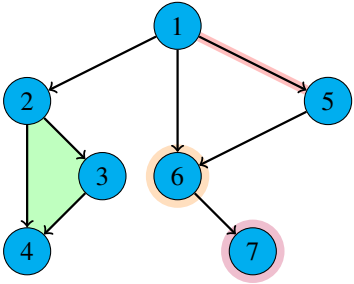

In each round $t\in[T]$, the game proceeds as follows for player $i\in\mathcal{V}$:
\begin{enumerate}[]
    \item the player is idle until the actions of all parents in $\mathcal{E}_i$ have been observed,
    \item the player picks an action $a_i(t)\in \mathcal{A}_i$ according to its strategy $ \pi_i(t)$,
    \item once all the $n$ players in $\mathcal{V}$ have chosen an action, the player observes the bandit reward $r_t(\mathbf{a}(t))$ and updates its strategy.
\end{enumerate}
The goal of the game is to find policies $\{ \pi_i(t)\}_{i=1}^{n} $ that depend on past actions and rewards in order to minimize the joint pseudo-regret $R(T)$ which is defined similarly to the pseudo regret~\citep[Ch.~4.2]{Shalev12} as
\begin{equation}
    R(T) = r(\mathbf{a}^{\star}) - \expect{\sum_{t=1}^T r_t(\mathbf{a}(t))},\label{eq:pseudo_regret_first}
\end{equation}
where 
\begin{equation*}
    r(\mathbf{a}^{\star}) =\max_{\mathbf{a}\in \mathcal{A}_{\mathcal{V}}} \expect{\sum_{t=1}^T  r_t(\mathbf{a})}, 
\end{equation*}
and the expectations are taken with respect to the rewards and the player actions.\footnote{This is called pseudo-regret as $r(\mathbf{a}^\star)$ is obtained by a maximization \emph{outside} of the expectation.} 
Note that $r(\mathbf{a}^\star)$ corresponds to the largest expected reward obtainable if all players use pure strategies.
Hence, the pseudo-regret in~(\ref{eq:pseudo_regret_first}) quantifies the difference between the expected reward accumulated by the learnt strategies and the reward-maximizing pure strategies in hindsight.

Our problem formulation pertains to a plethora of applications.
Examples include resource allocation in cognitive radio networks where available frequencies are obtained via channel sensing~\citep{Nafiseh15} and semi-autonomous vehicles with adaptive cruise control, i.e., vehicles ahead are observed before an action is decided~\citep{Marsden01}.
Also recently, the importance of coupled rewards and partner awareness through implicit communications, e.g., by observation, has been highlighted in human-robot and human-AI collaborative settings~\citep{Biyik22}.
Furthermore, our formulation is applicable in simple scenarios within to reinforcement learning~\cite{Ibarz21}. 

As will be shown in the next section, any no-regret algorithm can be used as a building block for the games considered herein to guarantee a sub-linear pseudo-regret in the number of rounds $T$.
As our goal is to study the joint pseudo-regret~(\ref{eq:pseudo_regret_first}) for adversarial, we start from a state-of-the-art algorithm for the adversarial multi-armed bandit problem.
In particular, we will utilize the \textsc{Tsallis-INF} algorithm that guarantees a pseudo-regret with the optimal scaling in the adversarial single-player setting~\citep{Zimmert21}.


\section{Analysis of the joint pseudo-regret}

Our analysis of the joint pseudo-regret builds upon learning algorithms for the single-player multi-armed bandit problem.
First, let us build intuition on how to use a multi-armed bandit algorithm in the DAG-based game described in Section~\ref{sec:prob}.
Consider a $2$-player Stackelberg game where the players choose actions from $\mathcal{A}_1$ and $\mathcal{A}_2$, respectively, and where player $2$ observes the actions of player $1$.
For simplicity, we let player $1$ use a mixed strategy whereas player $2$ is limited to a pure strategy.
Furthermore, consider the rewards to be \emph{a priori} known by the players and let $T=1$ for which the Stackelberg game may be viewed as a bi-level optimization problem~\citep{Aussel20}.
In this setting, the action of player $1$ imposes a Nash game on player $2$ whom attempts to play optimally given the observation.
Hence, player $2$ has $A_1$ pure strategies, one for each of the $A_1$ actions of player $1$.

We may generalize this idea to the DAG-based multiplayer game with unknown bandit-rewards and $T\geq 1$ to achieve no-regret.
Indeed, a player $i\in\mathcal{V}$ may run $\vert \mathcal{A}_{\mathcal{E}_i} \vert$ different multi-armed bandit algorithms, one for each of the joint actions of its parents.
Algorithm~\ref{alg:1} illustrates this idea in conjunction with the \textsc{Tsallis-INF} update rule introduced by~\citet{Zimmert21}, which is given in Algorithm~\ref{alg:2} for completeness.\footnote{The original \textsc{Tsallis-INF} Algorithm is given in terms of losses. To use rewards, one may simply use the relationship $l=1-r$.}
In particular, for the $2$-player Stackelberg game, the leader runs a single multi-armed bandit algorithm whereas the follower runs $A_1$ learning algorithms. 
For simplicity, Algorithm~\ref{alg:1} assumes that player $i$ knows the size of the joint action space of its parents, i.e., $\vert \mathcal{A}_{\mathcal{E}_i} \vert$. 
Dropping this assumption is straightforward: simply keep track of the observed joint actions and initiate a new multi-armed bandit learner upon a unique observation.

\begin{algorithm}
  \caption{Learning algorithm of player $i\in\mathcal{V}$}
  \label{alg:1}
  \begin{algorithmic}[1]
    \STATE{\bfseries Input:} for ease of notation, let the actions in $\mathcal{A}_{\mathcal{E}_i}$ be labeled as $1, 2, \dots, \vert\mathcal{A}_{\mathcal{E}_i}\vert$
    \STATE initialize cumulative loss $\mathbf{L}_{k}\leftarrow \mathbf{0} \in \mathbb{R}^{A_i}$ for $k \in  [\vert\mathcal{A}_{\mathcal{E}_i}\vert] $ 
    \STATE initialize fixed-point $x_{k}\leftarrow0$ for $k \in [\vert\mathcal{A}_{\mathcal{E}_i}\vert] $ 
    \STATE initialize counter $n_{k} \leftarrow 0$ for $k \in  [\vert\mathcal{A}_{\mathcal{E}_i}\vert]$ 
    \FOR{$t=1,2,\dots,T$}
        \STATE observe the joint action $j \in [\vert\mathcal{A}_{\mathcal{E}_i}\vert]$  of the preceding players 
        \STATE increase counter $n_{j} \leftarrow n_{j}+1$
        \STATE obtain new strategy and new fixed-point $(\pi_i(t), x_j) \leftarrow \textsc{Tsallis-INF}(n_j, \mathbf{L}_j,x_j)$ 
        \STATE play action $a_i(t) \sim \pi_i(t)$
        \STATE observe the joint bandit-reward $r_t(\mathbf{a}(t))$
        \STATE update the cumulative loss for all $k\in[A_i]$ as \\$L_{j,k} \leftarrow L_{j,k} + \mathbf{1}\{a_i(t)=k\} (1-r_t(\mathbf{a}(t)))/p_{k}$ 
    \ENDFOR
  \end{algorithmic}
\end{algorithm}
\begin{algorithm}
  \caption{Strategy update for player $i\in\mathcal{V}$}
  \label{alg:2}
  \begin{algorithmic}[1]
     \STATE{\bfseries Input:} time step $t$, cumulative rewards $\mathbf{L} \in \mathbb{R}_+^{A_i}$, previous fixed point $x$
     \textbf{Output} strategy $\pi_i(t)$, fixed point $x$
    \STATE set learning rate $\eta \leftarrow 2\sqrt{1/t}$
    \REPEAT
        \STATE $p_{j} \leftarrow 4(\eta(L_j-x))^{-2}$ for all $j\in[A_i]$ 
        \STATE $x \leftarrow x-\left( \sum_{j=1}^{A_i} p_{j} - 1 \right)/\left(\eta\sum_{j=1}^{A_i} p_{j}^{3/2}\right)$
    \UNTIL{convergence}
    \STATE update strategy $\pi_i(t) \leftarrow (p_{1}, \dots, p_{A_i})$
  \end{algorithmic}
\end{algorithm}

Next, we go on to analyze the joint pseudo-regret of Algorithm~\ref{alg:1}.
First, we present a result on the pseudo-regret for the single-player multi-armed bandit problem that will be used throughout.
\begin{theorem}[Pseudo-regret of \textsc{Tsallis-INF}]\label{thm:1}
    Consider a single-player multi-armed bandit problem with $A_1$ arms, played over $T$ rounds.
    Let the player operate according to Algorithm~\ref{alg:1}.
    Then, the pseudo-regret satisfies
    \begin{equation}
        R(T) \leq 4\sqrt{A_1 T} + 1. \nonumber
    \end{equation}
\end{theorem}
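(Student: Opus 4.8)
The plan is to recognize the update in Algorithm~\ref{alg:2} as follow-the-regularized-leader (FTRL) on the probability simplex with the $\tfrac{1}{2}$-Tsallis entropy as regularizer, and then to invoke the standard penalty-plus-stability regret decomposition. Writing $\hat{L}_{t}=\sum_{s<t}\hat\ell_s$ for the cumulative importance-weighted loss, the iterate $\pi_1(t)$ is the minimizer over the simplex of $\langle \hat L_t, p\rangle + \Psi_t(p)$ with $\Psi_t(p)=-\tfrac{4}{\eta_t}\sum_{a}\sqrt{p_a}$ and $\eta_t=2/\sqrt{t}$. The inner loop $p_a=4(\eta_t(L_a-x))^{-2}$ together with the fixed-point equation for $x$ is exactly the stationarity (KKT) condition of this problem, with $x$ the Lagrange multiplier enforcing $\sum_a p_a=1$; I would first verify this correspondence and that the iteration converges to the true minimizer.

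Next I would pass from losses $\ell=1-r$ to the importance-weighted estimator $\hat\ell_{t,a}=\ell_{t,a}\mathbf{1}\{a_1(t)=a\}/p_a$, which is conditionally unbiased, so that the pseudo-regret equals the expected linearized regret $\expect{\sum_t \langle \pi_1(t)-e_{a^\star},\hat\ell_t\rangle}$ against the indicator $e_{a^\star}$ of the best fixed arm. The standard FTRL bound then splits this into a \emph{penalty} term controlled by the range of the regularizer over the simplex and a \emph{stability} term controlled by a local-norm (second-order) expansion.

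For the stability term, the Hessian of $-4\sum_a\sqrt{p_a}$ is $\operatorname{diag}(p_a^{-3/2})$, so the per-round contribution is $\tfrac{\eta_t}{2}\sum_a p_a^{3/2}\hat\ell_{t,a}^2$. Taking the conditional expectation collapses the estimator and, using $\ell_{t,a}\in[0,1]$, bounds this by $\tfrac{\eta_t}{2}\sum_a\sqrt{p_a}\le \tfrac{\eta_t}{2}\sqrt{A_1}$ via Cauchy--Schwarz. Summing with $\sum_{t=1}^T t^{-1/2}\le 2\sqrt{T}$ yields a stability bound of $2\sqrt{A_1T}$. For the penalty term, the range of $\sum_a\sqrt{p_a}$ over the simplex is $[1,\sqrt{A_1}]$, so the regularizer contributes at most $\tfrac{1}{\eta_T}\cdot 4(\sqrt{A_1}-1)=2\sqrt{T}(\sqrt{A_1}-1)\le 2\sqrt{A_1T}$, with the residual boundary terms (the $t=1$ contribution and the rounding in the harmonic-sum estimate) absorbed into the additive constant. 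Adding the two pieces gives $R(T)\le 4\sqrt{A_1T}+1$.

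The main obstacle will be making the stability/local-norm step rigorous: the second-order expansion requires evaluating the Hessian at an intermediate point between consecutive iterates, and one must show this point stays in a region where $p_a^{-3/2}$ does not blow up faster than the $p_a^{3/2}$ weighting can absorb — precisely the delicate self-concordance-type argument for the $\tfrac{1}{2}$-Tsallis regularizer carried out by \citet{Zimmert21}. Since the learning rate, regularizer, and constants here coincide with theirs, the cleanest route is to invoke their adversarial regret bound directly; the sketch above explains why the leading constant is exactly $4\sqrt{A_1T}$, with the $+1$ arising from the boundary terms.
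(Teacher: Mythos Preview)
Your proposal is correct and ultimately lands on the same argument the paper uses: for a single player $\mathcal{E}_1=\emptyset$, so $\lvert\mathcal{A}_{\mathcal{E}_1}\rvert=1$, Algorithm~\ref{alg:1} runs a single instance of \textsc{Tsallis-INF}, and the bound is then quoted directly from \citet[Th.~1]{Zimmert21}. The paper's proof is exactly that two-line reduction-and-citation; your FTRL penalty--stability sketch is accurate background for \emph{why} the Zimmert--Seldin bound holds, but it is more than what the paper actually does here.
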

\begin{proof}
For a single player, $\mathcal{E}_1=\emptyset$ and we have $\vert \mathcal{A}_{\mathcal{E}_1}\vert=1$ by convention.
Hence, our setting becomes equivalent to that of~\citet[Th~1]{Zimmert21} and the result follows thereof. \end{proof}

Next, we consider a two-player Stackelberg game with joint bandit-rewards defined over a two-player clique.
We have the following upper bound on the joint pseudo-regret.
\begin{theorem}[Joint pseudo-regret over cliques of size $2$] \label{thm:2}
    Consider a $2$-player Stackelberg game with bandit-rewards, given by~(\ref{eq:struct_reward}), defined over a single clique containing both players.
    Furthermore, let each of the players follow Algorithm~\ref{alg:1}.
    Then, the joint pseudo-regret satisfies
    \begin{align}
        R(T)\leq 4\sqrt{A_1 A_2  T} + 4  \sqrt{A_1 T} + A_1+1. \nonumber
    \end{align}
\end{theorem}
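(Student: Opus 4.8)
The plan is to specialize the reward~(\ref{eq:struct_reward}) to the single clique $\mathcal{N}_1=\{1,2\}$ with $\beta_1=1$, so $r_t=r_t^1$, and to use the structure of Algorithm~\ref{alg:1}: the leader (player~$1$) runs one \textsc{Tsallis-INF} instance over its $A_1$ arms, while the follower (player~$2$) runs $A_1$ independent \textsc{Tsallis-INF} instances, one per observed leader action $j\in\mathcal{A}_1$. Writing $\mathbf{a}^\star=(j^\star,k^\star)$, $\mathcal{T}_j=\{t:a_1(t)=j\}$ and $n_j=\card{\mathcal{T}_j}$, I would split the joint pseudo-regret through the intermediate benchmark
\[
S=\sum_{j=1}^{A_1}\max_{k\in\mathcal{A}_2}\sum_{t\in\mathcal{T}_j} r_t(j,k),
\]
the reward accrued when the leader plays its realized action sequence but, on each instance, the follower is replaced by its best fixed response in hindsight. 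Then $R(T)=\big(r(\mathbf{a}^\star)-\expect{S}\big)+\big(\expect{S}-\expect{\sum_{t} r_t(\mathbf{a}(t))}\big)$, a \emph{leader} term plus a nonnegative \emph{follower} term.

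For the follower term I would condition on the realized leader-action sequence. Given this sequence, instance $j$ is exactly a single-player $A_2$-armed bandit played over the $n_j$ rounds of $\mathcal{T}_j$, so Theorem~\ref{thm:1}---which is anytime, as the learning rate in Algorithm~\ref{alg:2} depends only on the per-instance counter $n_j$---bounds its regret against the best fixed arm by $4\sqrt{A_2 n_j}+1$. Summing over the $A_1$ instances gives $4\sqrt{A_2}\sum_{j}\sqrt{n_j}+A_1$, and since $\sum_{j} n_j=T$ holds for every realization, Cauchy--Schwarz gives $\sum_{j}\sqrt{n_j}\le\sqrt{A_1T}$ pointwise; taking expectations bounds the follower term by $4\sqrt{A_1A_2T}+A_1$.

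For the leader term I would use that, by construction, $S\ge\sum_{t} r_t(a_1(t),k^\star)$, since the follower may always select the globally optimal response $k^\star$ on each instance; hence $r(\mathbf{a}^\star)-\expect{S}\le \sum_{t} r_t(j^\star,k^\star)-\expect{\sum_{t} r_t(a_1(t),k^\star)}$, which has the form of the single-player pseudo-regret of the leader's $A_1$-armed \textsc{Tsallis-INF} against the fixed arm $j^\star$, and which I would argue is controlled by Theorem~\ref{thm:1} at $4\sqrt{A_1T}+1$. Adding the two contributions yields the claimed $4\sqrt{A_1A_2T}+4\sqrt{A_1T}+A_1+1$.

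The hard part is making the leader term legitimate. The leader's bandit feedback $r_t(a_1(t),a_2(t))$ is generated by the follower's still-learning, and hence nonstationary, strategy, whereas the comparator above is evaluated against the fixed response $k^\star$; the leader therefore does not literally observe the reward sequence $r_t(\cdot,k^\star)$ on which the single-player bound is stated. The benchmark $S$ is chosen precisely so that all of this coupling is absorbed into the follower term, leaving a residual leader comparison that one would like to match the stationary single-player setting of Theorem~\ref{thm:1}; the delicate step is to justify this reduction while controlling the dependence between the random counts $n_j$ and the follower's per-instance play, which is exactly why the anytime, per-realization form of the \textsc{Tsallis-INF} bound and the deterministic constraint $\sum_{j} n_j=T$ are essential.
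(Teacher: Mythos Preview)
Your follower term is handled cleanly and reaches the same bound $4\sqrt{A_1A_2T}+A_1$ as the paper; in fact your partition by the \emph{realized} leader action $a_1(t)$ is more natural than the paper's partition by the per-round maximizer $a_1^+(t)$, since it aligns exactly with the follower's \textsc{Tsallis-INF} instances.

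The gap you flag in the leader term, however, is real, and your decomposition does not close it. The quantity $\expect{\sum_t r_t(j^\star,k^\star)-r_t(a_1(t),k^\star)}$ is the pseudo-regret of a player facing the oblivious sequence $r_t(\cdot,k^\star)$, but the leader's \textsc{Tsallis-INF} instance never sees that sequence: its importance-weighted loss estimates are built from $r_t(a_1(t),a_2(t))$, not $r_t(a_1(t),k^\star)$. Theorem~\ref{thm:1} therefore says nothing about this comparison, and there is no reason it should be small---the leader may be pulled toward arms that look good under the follower's transient play yet are poor under $k^\star$.

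The paper avoids this by choosing a different intermediate point. Instead of your $S$, it inserts $r_t(a_1^\star,a_2(t))$, keeping the \emph{realized} follower action on both sides of the leader comparison. The leader term is then $\expect{\sum_t r_t(a_1^\star,a_2(t))-r_t(a_1(t),a_2(t))}$, which, after replacing $a_1^\star$ by the best fixed arm $a_1^\circ=\arg\max_{a_1}\expect{\sum_t r_t(a_1,a_2(t))}$, is exactly the pseudo-regret of the leader's \textsc{Tsallis-INF} run against the adversarial rewards it actually observes, so Theorem~\ref{thm:1} applies directly to give $4\sqrt{A_1T}+1$. The price is that the follower term now compares $r_t(a_1^\star,a_2^\star(a_1^\star))$ to $r_t(a_1^\star,a_2(t))$, which the paper loosens to a per-round maximum over the leader action before partitioning. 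In short, your route makes the follower term tight and the leader term unprovable; the paper's route makes the leader term the leader's genuine bandit regret, at the cost of a looser follower decomposition.
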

\begin{proof}
Without loss of generality, let player $2$ observe the actions of player $1$.
Let $a_1(t)\in \mathcal{A}_1$ and $a_2(t)\in \mathcal{A}_2$ denote the actions of player $1$ and player $2$, respectively, at time $t\in[T]$ and let $a^\star_1$ and $a^\star_2(a_1)$ denote the reward-maximizing pure strategies of the players in hindsight, i.e.,
\begin{align}
    a^\star_1 = \arg \max_{a_1\in\mathcal{A}_1} \expect{\sum_{t=1}^T r_t(a_1, a_2^{\star}(a_1))}\label{eq:pure_strats1}, \\
    a^\star_2(a_1) = \arg\max_{a_2\in \mathcal{A}_2} \expect{\sum_{t=1}^T r_t(a_1, a_2)}.\label{eq:pure_strats2}
\end{align}
Note that the optimal joint decision in hindsight is given by $(a_1^\star, a_2^\star(a_1^\star))$.
The joint pseudo-regret is given by
\begin{align}
     R(T) &= \sum_{t=1}^T \expect{r_t(a^\star_1, a^\star_2(a_1^\star)) - r_t(a^\star_1,a_2(t))}  \nonumber\\
    & +\expect{ r_t(a_1^\star,a_2(t)) - r_t(a_1{(t)}, a_2{(t)})} \nonumber\\
    &\leq \sum_{t=1}^T \max_{a_t\in \mathcal{A}_1} \expect{r_t(a_t, a^\star_2(a_t)) - r_t(a_t,a_2(t))} \nonumber \\
    &\> + \expect{\sum_{t=1}^T r_t(a^\star_1,a_2(t)) - r_t(a_1{(t)}, a_2{(t)})}. \label{eq:pseudo_regret}
\end{align}
Next, let 
\begin{equation}
a_1^+(t) =  \arg\max_{a_t\in \mathcal{A}_1} \expect{r_t(a_t, a_2^\star(a_t)) - r_t(a_t,a_2(t))} \nonumber
\end{equation}
and let $\mathcal{T}_a=\{t: a_1^+(t)=a\}$, for $a\in\mathcal{A}_1$, denote all the rounds that player $1$ chose action $a$ and introduce $T_a = |\mathcal{T}_a|$. 
Then, the first term in~(\ref{eq:pseudo_regret}) is upper-bounded as
\begin{align}
&\sum_{t=1}^T \max_{a_t\in \mathcal{A}_1} \expect{r_t(a_t, a_2^\star(a_t)) - r_t(a_t,a_2(t))} \nonumber\\
&= \sum_{a\in A_1}\sum_{t\in \mathcal{T}_a} \expect{r_t(a, a_2^\star(a)) - r_t(a,a_2(t))} \nonumber\\
&\leq \sum_{a\in A_1} 4\sqrt{A_2 T_a} + 1 \label{eq:th1_follower_bound}\\
&\leq \max_{\sum_a T_a = T}\sum_{a\in A_1} 4\sqrt{A_2T_a} + 1 \nonumber\\
&= 4\sqrt{A_1 A_2T} + A_1 \label{eq:th1_result_first}
\end{align}
where~(\ref{eq:th1_follower_bound}) follows from Theorem~\ref{thm:1} and because player $2$ follows Algorithm~\ref{alg:1}.
Note that the actions in $\mathcal{T}_a$ may not be consecutive.
However, as we consider adversarial rewards, Theorem~\ref{thm:1} is still applicable.

Next, we consider the second term in~(\ref{eq:pseudo_regret}).
Note that, according to~(\ref{eq:pure_strats2}), $a_1^\star$ is obtained from the optimal pure strategies in hindsight of both the players.
Let
\begin{equation}
    a_1^{\circ} = \arg \max_{a_1\in\mathcal{A}_1} \sum_{t=1}^T \expect{r_t(a_1, a_2(t))}\nonumber
\end{equation}
and note that 
\begin{equation*}
    \expect{\sum_{t=1}^T r_t(a^\star_1,a_2(t))} \leq \expect{\sum_{t=1}^T r_t(a_1^\circ,a_2(t))}.
\end{equation*}
By adding and subtracting $r_t(a_1^\circ, a_2(t))$ to the second term in~(\ref{eq:pseudo_regret}), we get
\begin{align}
    &\expect{\sum_{t=1}^T r_t(a^\star_1,a_2(t))- r_t(a_1{(t)}, a_2{(t)})} \nonumber \\
    &\leq \expect{\sum_{t=1}^T r_t(a_1^\circ,a_2(t))- r_t(a_1{(t)}, a_2{(t)})} \nonumber\\
    &\leq 
    4\sqrt{A_1 T} + 1\label{eq:th1_result_second}
\end{align}
where the last equality follows from Theorem~\ref{thm:1}.
The result follows from~(\ref{eq:th1_result_first}) and~(\ref{eq:th1_result_second}).
\end{proof}

From Theorem~\ref{thm:2}, we note that the joint pseudo-regret scales with the size of the joint action space as $R(T) = \mathcal{O}(\sqrt{A_1 A_2 T})$.
This is expected as a centralized version of the cooperative Stackelberg game may be viewed as a single-player multi-armed bandit problem with $A_1 A_2$ arms where, according to Theorem~\ref{thm:1}, the pseudo-regret is upper-bounded by $4\sqrt{A_1 A_2 T} + 1$.
Hence, from Theorem~\ref{thm:2}, we observe a penalty of $4\sqrt{A_1 T} + A_1$ due to the decentralized nature of our setup.
Moreover, in the single-player setting, Algorithm~\ref{alg:2} was shown in~\citet{Zimmert21} to achieve the same scaling as the lower bound in~\citet[Th.~6.1]{Bianchi06}.
Hence, Algorithm~\ref{alg:1} achieves the optimal scaling.
Next, we extend Theorem~\ref{thm:2} to cliques of size larger than two.

\begin{theorem}[Joint pseudo-regret over a clique of arbitrary size] \label{thm:3}
    Consider a DAG-based game with bandit rewards given by~(\ref{eq:struct_reward}), defined over a single clique containing $m$ players.
    Let each of the players operate according to Algorithm~\ref{alg:1}.
    Then, the joint pseudo-regret satisfies
    \begin{equation}
        R(T) \leq 4 \sqrt{T} \sum_{i=1}^m \prod_{k=1}^i \sqrt{A_k } + \sum_{i=1}^{m-1} \prod_{k=1}^i A_k + 1.\nonumber
    \end{equation}
\end{theorem}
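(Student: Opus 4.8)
The plan is to prove the bound by induction on the clique size $m$, taking Theorem~\ref{thm:1} (the case $m=1$) as the base case and generalizing the two-part decomposition that underlies the proof of Theorem~\ref{thm:2} in the inductive step. I would first fix a topological order of the clique so that player $i$ observes exactly players $1,\dots,i-1$, so player $i$ maintains $\prod_{k=1}^{i-1}A_k$ bandit instances, one per joint action of its parents. I would single out player $1$ as the leader and treat $S=\{2,\dots,m\}$ as a sub-clique of size $m-1$ in which every member additionally observes player $1$. Conditioned on player~$1$ playing a fixed action $a_1$, the players in $S$ run precisely the size-$(m-1)$ algorithm on action spaces $A_2,\dots,A_m$ with rewards $r_t(a_1,\cdot)$, so the induction hypothesis applies slice by slice.

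In the inductive step I would split the joint pseudo-regret exactly as in Theorem~\ref{thm:2}, writing $\mathbf{a}_S(t)$ and $\mathbf{a}_S^\star(a_1)$ for the actual and optimal sub-clique profiles:
\begin{align}
R(T) &= \sum_{t=1}^T \expect{r_t(a_1^\star,\mathbf{a}_S^\star(a_1^\star)) - r_t(a_1^\star,\mathbf{a}_S(t))} \nonumber \\
&\quad + \sum_{t=1}^T \expect{r_t(a_1^\star,\mathbf{a}_S(t)) - r_t(a_1(t),\mathbf{a}_S(t))}. \nonumber
\end{align}
The second sum is the leader's regret: player~$1$ runs a single bandit instance with rewards $r_t(\cdot,\mathbf{a}_S(t))$, and bounding the fixed arm $a_1^\star$ by the best arm in hindsight, Theorem~\ref{thm:1} gives $4\sqrt{A_1 T}+1$ for this term, which is exactly the $i=1$ contribution.

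For the first sum I would mirror the max-and-partition argument of Theorem~\ref{thm:2}: upper bound the summand by $\max_{a\in\mathcal{A}_1}\expect{r_t(a,\mathbf{a}_S^\star(a))-r_t(a,\mathbf{a}_S(t))}$ (valid since $a=a_1^\star$ recovers the summand), group the rounds by the observed action of player~$1$, and apply the induction hypothesis to each group as a sub-clique problem over $T_a$ rounds with $\sum_a T_a=T$. Because the rewards are adversarial, the induction hypothesis stays valid on these non-consecutive subsequences, as already noted in Theorem~\ref{thm:2}. This yields a bound of the form $\sum_{a\in\mathcal{A}_1}\big(4\sqrt{T_a}\sum_{i=2}^m\prod_{k=2}^i\sqrt{A_k}+\sum_{i=2}^{m-1}\prod_{k=2}^i A_k+1\big)$, which I would then maximize over $\{T_a\}$ subject to $\sum_a T_a=T$.

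The decisive step is this optimization. By concavity of the square root (Cauchy--Schwarz), $\sum_{a}\sqrt{T_a}\le\sqrt{A_1 T}$, so the $\sqrt{\cdot}$ part becomes $4\sqrt{T}\,\sqrt{A_1}\sum_{i=2}^m\prod_{k=2}^i\sqrt{A_k}=4\sqrt{T}\sum_{i=2}^m\prod_{k=1}^i\sqrt{A_k}$, while the constant part is multiplied by $A_1$, giving $A_1\sum_{i=2}^{m-1}\prod_{k=2}^i A_k+A_1=\sum_{i=1}^{m-1}\prod_{k=1}^i A_k$. Adding the leader's term $4\sqrt{A_1 T}+1=4\sqrt{T}\prod_{k=1}^1\sqrt{A_k}+1$ collapses the two $\sqrt{T}$ sums into $4\sqrt{T}\sum_{i=1}^m\prod_{k=1}^i\sqrt{A_k}$ and closes the induction. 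I expect the main obstacle to lie in the first sum: making the reduction to the size-$(m-1)$ sub-clique rigorous requires care in matching the rounds charged to each value of player~$1$'s action with the instances actually run by the players in $S$ (the same subtlety already present in Theorem~\ref{thm:2}), after which the Cauchy--Schwarz optimization and the telescoping of the products $\prod_k\sqrt{A_k}$ are routine but must be tracked carefully to recover the stated closed form.
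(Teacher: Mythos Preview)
Your proposal is correct and follows essentially the same route as the paper: induction on $m$ with the two-term decomposition of Theorem~\ref{thm:2}, bounding the leader term via Theorem~\ref{thm:1} and the sub-clique term via the induction hypothesis after the max-and-partition step, then maximizing over $\{T_a\}$ by concavity (the paper writes this as $A_1 R_{\mathrm{ub}}(T/A_1,m-1)$, your Cauchy--Schwarz is the same bound). One small slip: the partition in Theorem~\ref{thm:2} is by the \emph{argmax} $a_1^{+}(t)$, not by the action player~$1$ actually played, so your phrase ``group the rounds by the observed action of player~$1$'' should be adjusted accordingly.
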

\begin{proof}
Let $R_{\mathrm{ub}}(T,m)$ denote an upper bound on the joint pseudo-regret when the bandit-reward is defined over a clique containing $m$ players.
From Theorem~\ref{thm:1} and Theorem~\ref{thm:2}, we have that
\begin{align*}
        R_{\mathrm{ub}}(T,1) &= 4\sqrt{A_1 T} +1 \\
        R_{\mathrm{ub}}(T,2) &= 4\sqrt{A_1 T} + 4 \sqrt{A_1 A_2  T } + A_1+1,
\end{align*}
respectively.
Therefore, we form an induction hypothesis as
\begin{equation}
    R_{\mathrm{ub}}(T,m) = 4 \sqrt{T}\sum_{i=1}^m  \prod_{k=1}^i \sqrt{A_k } + \sum_{i=1}^{m-1} \prod_{k=1}^i A_k + 1. \label{eq:hypothesis}
\end{equation}

Assume that~(\ref{eq:hypothesis}) is true for a clique containing $m-1$ players and add an additional player, assigned player index $1$, whose actions are observable to the original $m-1$ players.
The $m$ players now form a clique $\mathcal{C}$ of size $m$.
Let $\mathbf{a}(t)\in \mathcal{A}_{\mathcal{C}}$ denote the joint action of all the players in the clique at time $t\in[T]$ and let $\mathbf{a}_{-i}(t) = (a_1(t), \dots, a_{i-1}(t), a_{i+1}(t), \dots, a_m(t))\in\mathcal{A}_{\mathcal{C}\setminus i}$ denote the joint action excluding the action of player $i$.  
Furthermore, let
\begin{align}
    a^\star_1 &= \arg\max_{a_1\in \mathcal{A}_1} \expect{\sum_{t=1}^T r_t(a_1, \mathbf{a}_{-1}^\star(a_1))} \nonumber \\
    \mathbf{a}^\star_{-1}(a_1) &= \arg\max_{\mathbf{a}\in \mathcal{A}_{\mathcal{C} \setminus 1}} \expect{\sum_{t=1}^T r_t(a_1, \mathbf{a})} \nonumber
\end{align}
denote the optimal actions in hindsight of player $1$ and the optimal joint action of the original $m-1$ players given the action of player $1$, respectively.
The optimal joint action in hindsight is given as $\mathbf{a}^\star=(a^\star_1, \mathbf{a}^\star_{-1}(a_1^\star))$.
Following the steps in the proof of Theorem~\ref{thm:2} verbatim, we obtain
\begin{align}
    &R(T) = 
    \sum_{t=1}^T \expect{r_t(\mathbf{a}^\star) - r_t(a_1^\star, \mathbf{a}_{-1}(t))} \nonumber \\
    &\>+\expect{r_t(a_1^\star, \mathbf{a}_{-1}(t)) - r_t(\mathbf{a}(t))} \nonumber\\
    &\leq
    \sum_{t=1}^T \max_{a_1} \expect{r_t(a_1, \mathbf{a}^\star_{-1}(a_1)) - r_t(a_1, \mathbf{a}_{-1}(t))} \nonumber\\
    &\>+ \sum_{t=1}^T \expect{r_t(a_1^\star, \mathbf{a}_{-1}(t)) - r_t(a_1(t), \mathbf{a}_{-1}(t))} \nonumber\\
    &\leq
    \sum_{a\in \mathcal{A}_1} \sum_{t\in\mathcal{T}_{a}} \expect{r_t(a, \mathbf{a}^\star_{-1}(a)) - r_t(a, \mathbf{a}_{-1}(t))} \nonumber \\
    &\>+ \sum_{t=1}^T \expect{r_t(a_1^\circ, \mathbf{a}_{-1}(t)) - r_t(a_1(t), \mathbf{a}_{-1}(t))} \nonumber\\
    &\leq
    \sum_{a\in \mathcal{A}_1} R_{\mathrm{ub}}(T_{a}, m-1) + 4\sqrt{A_1 T} + 1 \nonumber\\
    &\leq
    A_1 R_{\mathrm{ub}}(T/{A_1}, m-1) + 4\sqrt{A_1 T} + 1 \label{eq:th3_ub}
\end{align}
where $\mathcal{T}_a$, $T_a$, and $a_n^\circ$ are defined analogously as in the proof of Theorem~\ref{thm:2}.
By using the induction hypothesis~(\ref{eq:hypothesis}) in~(\ref{eq:th3_ub}) and by accounting for the original $m-1$ players being indexed from $2$ to $m$, we obtain
\begin{align*}
    R(T) &\leq  A_1\left(4 \sqrt{T/A_1}\sum_{i=2}^m  \prod_{k=2}^i \sqrt{A_k } + \sum_{i=2}^{m-1} \prod_{k=2}^i A_k + 1\right) \\
    &\> +  4\sqrt{A_1 T} + 1  \\
    &=    R_{\mathrm{ub}}(T, m) 
\end{align*}
which is what we wanted to show.
\end{proof}

As in the two-player game, the joint pseudo-regret of Algorithm~\ref{alg:1} achieves the optimal scaling, i.e., $R(T) = \mathcal{O}(\sqrt{T} \prod_{k=1}^m\sqrt{A_k})$, but exhibits a penalty due to the decentralized setting which is equal to $4\sqrt{T} \sum_{i=1}^{m-1}\prod_{k=1}^{i}\sqrt{A_k} + \sum_{i=1}^{m-2}\prod_{k=1}^{i}A_k$.

Up until this point, we have considered the pseudo-regret when the bandit-reward~(\ref{eq:struct_reward}) is defined over a single clique.
The next theorem leverages the previous results to provide an upper bound on the joint pseudo-regret when the bandit-reward is defined over an arbitrary number of independent cliques in the DAG.
\begin{theorem}[Joint pseudo-regret in DAG-based games]\label{th:4}
Consider a DAG-based game with bandit rewards given as in~(\ref{eq:struct_reward}) and let $\mathcal{C}$ contain a collection of independent cliques associated with the DAG.
Let each player operate according to Algorithm~\ref{alg:1}.
Then, the joint pseudo-regret satisfies
\begin{equation*}
    R(T) = \mathcal{O}\left(\sqrt{T \max_{k \in [\vert \mathcal{C} \vert ]} \vert \mathcal{A}_{\mathcal{N}_k} \vert }\right)
\end{equation*}
where $\mathcal{A}_{\mathcal{N}_k}$ denotes the joint action-space of the players in the $k$th clique $\mathcal{N}_k\in \mathcal{C}$.
\end{theorem}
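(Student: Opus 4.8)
The plan is to exploit the fact that the reward~(\ref{eq:struct_reward}) is a convex combination of clique rewards acting on \emph{disjoint} player sets, so that both the hindsight optimum and the pseudo-regret decompose additively across the cliques. First I would expand the hindsight optimum $r(\mathbf{a}^\star)=\max_{\mathbf{a}\in\mathcal{A}_{\mathcal{V}}}\expect{\sum_{t=1}^T r_t(\mathbf{a})}$ by substituting~(\ref{eq:struct_reward}). Since $\mathcal{N}_k\cap\mathcal{N}_j=\emptyset$, the $k$-th summand $\beta_k r_t^k$ depends only on the actions of the players in $\mathcal{N}_k$, so the maximization over the joint action space factorizes and $r(\mathbf{a}^\star)=\sum_{k}\beta_k\max_{\mathbf{a}_k\in\mathcal{A}_{\mathcal{N}_k}}\expect{\sum_{t=1}^T r_t^k(\mathbf{a}_k)}$. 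The identical substitution in the achieved reward $\expect{\sum_t r_t(\mathbf{a}(t))}$ produces a clique-wise sum with no coupling, and subtracting gives the exact decomposition $R(T)=\sum_{k=1}^{\vert\mathcal{C}\vert}\beta_k R_k(T)$, where $R_k(T)$ is the pseudo-regret of clique $k$ measured against its own reward $r_t^k$.

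Next I would bound each clique term using Theorem~\ref{thm:3}. The only subtlety is that the players of $\mathcal{N}_k$ update their strategies from the full bandit reward $r_t$ rather than from $r_t^k$. To handle this, I would view $\mathcal{N}_k$ as an isolated clique sub-game whose adversarial reward at round $t$ is the \emph{effective} reward $\rho_t(\mathbf{a}_k):=r_t(\mathbf{a}_k,\mathbf{a}_{-k}(t))$, i.e.\ the full reward with the realized actions $\mathbf{a}_{-k}(t)$ of all other cliques held fixed. Because the cliques are disjoint, the contribution $\sum_{j\neq k}\beta_j r_t^j$ does not depend on $\mathbf{a}_k$, so $\rho_t$ maps $\mathcal{A}_{\mathcal{N}_k}$ into $[0,1]$ and the players in $\mathcal{N}_k$ indeed run Algorithm~\ref{alg:1} driven by $\rho_t$. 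Theorem~\ref{thm:3} then applies verbatim to this sub-game and bounds its pseudo-regret $\tilde{R}_k(T):=\max_{\mathbf{a}_k}\expect{\sum_t\rho_t(\mathbf{a}_k)}-\expect{\sum_t\rho_t(\mathbf{a}_k(t))}$ by $R_{\mathrm{ub}}(T,\vert\mathcal{N}_k\vert)$. Cancelling the $\mathbf{a}_k$-independent terms inside $\tilde{R}_k(T)$ shows $\tilde{R}_k(T)=\beta_k R_k(T)$, whence $\beta_k R_k(T)\leq R_{\mathrm{ub}}(T,\vert\mathcal{N}_k\vert)$.

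Combining the two steps gives $R(T)=\sum_k\beta_k R_k(T)\leq\sum_k R_{\mathrm{ub}}(T,\vert\mathcal{N}_k\vert)$. By Theorem~\ref{thm:3} each summand is $\mathcal{O}(\sqrt{T\,\vert\mathcal{A}_{\mathcal{N}_k}\vert})$, and since the number of cliques $\vert\mathcal{C}\vert$ is a fixed constant independent of $T$, bounding every term by the largest one yields $R(T)\leq\vert\mathcal{C}\vert\max_k R_{\mathrm{ub}}(T,\vert\mathcal{N}_k\vert)=\mathcal{O}(\sqrt{T\,\max_k\vert\mathcal{A}_{\mathcal{N}_k}\vert})$, which is the claim; note that the $\beta_k$ weights never appear in the final rate because they are absorbed when passing from $R_k$ to $\tilde{R}_k$.

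I expect the main obstacle to be the rigorous justification of the per-clique reduction in the second step: one must argue that treating the other cliques' simultaneously-learned, and hence history-dependent, contributions as part of an adversarial reward does not invalidate Theorem~\ref{thm:3}. This is precisely the adaptivity issue already flagged after~(\ref{eq:th1_follower_bound}) — that the rounds feeding each inner learner need not be contiguous — and I would resolve it the same way, by appealing to the fact that the \textsc{Tsallis-INF} guarantee of Theorem~\ref{thm:1} holds for adversarial rewards. The decomposition together with the disjointness of the cliques is exactly what guarantees that these cross-clique terms cancel in every within-clique regret difference, so that they only ever enter as an admissible adversarial perturbation and never inflate the rate.
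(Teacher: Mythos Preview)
Your proposal is correct and follows essentially the same route as the paper: decompose the pseudo-regret across the disjoint cliques via~(\ref{eq:struct_reward}), invoke Theorem~\ref{thm:3} on each clique sub-game, and bound the resulting sum by its largest term. The only difference is bookkeeping of the weights---the paper keeps the $\beta_k$ outside and uses $\sum_k\beta_k=1$ to pass from $\sum_k\beta_k R_{\mathrm{ub}}(T,\mathcal{N}_k)$ to $\max_k R_{\mathrm{ub}}(T,\mathcal{N}_k)$, whereas you absorb $\beta_k$ into the effective-reward regret $\tilde R_k$ and arrive at $\sum_k R_{\mathrm{ub}}\le|\mathcal{C}|\max_k R_{\mathrm{ub}}$; both give the stated rate, and your explicit construction of $\rho_t$ arguably makes the appeal to Theorem~\ref{thm:3} (on the reward the players actually observe) cleaner than the paper's somewhat terse invocation.
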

\begin{proof}
Let $\mathcal{N}_k\in \mathcal{C}$ denote the players belonging to the $k$th clique in $\mathcal{C}$ with joint action space $\mathcal{A}_{\mathcal{N}_k}$.
The structure of~(\ref{eq:struct_reward}) allows us to express the joint pseudo-regret as
\begin{align}
    R(T) &= \expect{\sum_{t=1}^T r_t(\mathbf{a}^\star ) - r_t(\mathbf{a}(t)) }  \nonumber\\
    &\leq 
    \sum_{k=1}^{\vert \mathcal{C} \vert}\beta_k \expect{\sum_{t=1}^T r_t^k(\mathbf{a}^\star_k)   - r_t^k(P^k(\mathbf{a}(t)) )} 
    \label{eq:regret_clique}
\end{align}
where 
\begin{align*}
    \mathbf{a}^\star &= \arg \max_{\mathbf{a}\in\mathcal{A}_{\mathcal{V}}} \expect{\sum_{t=1}^T r_t(\mathbf{a})},  \\
    \mathbf{a}^\star_k  &= \arg \max_{\mathbf{a}\in\mathcal{A}_{\mathcal{N}_k}} \expect{\sum_{t=1}^T r_t^k(\mathbf{a})}, 
\end{align*}
and the inequality follows since $\expect{\sum_{t=1}^T r_t^k(P^k(\mathbf{a}^\star))} \leq \expect{\sum_{t=1}^T r_t^k(\mathbf{a}^\star_k)}$.
Now, for each clique $\mathcal{N}_k \in \mathcal{C}$, let the player indices in $\mathcal{N}_k$ be ordered according to the order of player observations within the clique.
As Theorem~\ref{thm:3} holds for any $\mathcal{N}_k\in\mathcal{C}$, we may, with a slight abuse of notation, bound the joint pseudo-regret of each clique as
\begin{align}
    R(T) &\leq 
     \sum_{k=1}^{\vert \mathcal{C} \vert} \beta_k R_{\mathrm{ub}}(T,  \mathcal{N}_k )  \leq 
    \max_{k\in[\vert\mathcal{C}\vert]} \beta_k R_{\mathrm{ub}}\left(T,   \mathcal{N}_k \right) \nonumber
\end{align}
  where $R_{\mathrm{ub}}(T, \mathcal{N}_k)$ follows from Theorem~\ref{thm:3} as
\begin{align*}
    R_{\mathrm{ub}}(T, \mathcal{N}_k ) &= 4 \sqrt{T}\sum_{i\in\mathcal{N}_k} \prod_{j \leq i, j\in \mathcal{N}_k} \sqrt{A_j } \\ 
    &\>+ \sum_{i\in\mathcal{N}_k^- } \prod_{j \leq i, j\in \mathcal{N}_k^-} A_j + 1 \nonumber
\end{align*}
where $\mathcal{N}_k^- $ excludes the last element in $\mathcal{N}_k$.
The result follows as $R_{\mathrm{ub}}(T, \mathcal{N}_k) = \mathcal{O}(\sqrt{T \vert\mathcal{A}_{\mathcal{N}_k}\vert})$ where the $\beta_k$ has been consumed in the prefactor.

\end{proof}


\section{Numerical results}

The experimental setup in this section is inspired by the socio-economic simulation in~\citep{Zheng20}.\footnote{The source code of our experiments is available on \url{https://anonymous.4open.science/r/bandit_optimization_dag-242C/}.}
We consider a simple taxation game where one player acts as a socio-economic planner and the remaining $M$ players act as workers that earn an income by performing actions, e.g., constructing houses.
The socio-economic planner divides the possible incomes into $N$ brackets where $[\beta_{i-1}, \beta_{i}]$ denotes the $i$th bracket with $\beta_{0}=0$ and $\beta_{N} = \infty$.
In each round $t\in[T]$, the socio-economic planner picks an action $\mathbf{a}_p(t) = (a_{p, 1}(t), \dots, a_{p, N}(t))$ that determines the taxation rate where $a_{p, i}(t) \in \mathcal{R}_i$ denotes the the marginal taxation rate in income bracket $i$ and $\mathcal{R}_i$ is a finite set.
We use the discrete set $\mathcal{A}_p = \prod_{i=1}^N \mathcal{R}_i$ of size $A_p$ to denote the action space of the planner.

In each round, the workers observe the taxation policy $\mathbf{a}_p{(t)}\in\mathcal{A}_p$ and choose their actions consecutively, see Fig.~\ref{fig:socioeconomic}.
Worker $j\in[M]$ takes actions $a_j(t)\in \mathcal{A}_j$ where $ \mathcal{A}_j$ is a finite set.
A chosen action $a_j(t)\in\mathcal{A}_j$ translates into a tuple $(x_j(t), \tilde{l}_j(t))$ consisting of a gross income and a marginal default labor cost, respectively.
Furthermore, each worker has a skill level $s_j$ that serves as a divisor of the default labor, resulting in an effective marginal labor $l_{j}(t) =\tilde{l}_{j}(t)/s_j$.
Hence, given a common action, high-skilled workers exhibit less labor than low-skilled workers.
 The gross income $x_j(t)$ of worker $j$ in round $t$ is taxed according to $\mathbf{a}_p(t)$ as
\begin{align*}
    \xi(x_j(t)) &= \sum_{i=1}^{N} a_{p,i}(t) (\beta_i-\beta_{i-1}) \mathbf{1}\{x_j(t) > \beta_{i}\} \\
    &\>+ (x_j(t)-\beta_{i-1})\mathbf{1}\{x_j(t)\in [\beta_{i-1}, \beta_i]\} 
\end{align*}
where $a_{p,i}(t)$ is the taxation rate of the $i$th income bracket and $\xi(x_j(t))$ denotes the collected tax.
Hence, worker $j$'s cumulative net income $z_j(t)$ and cumulative labor $\ell_j(t)$ in round $t$ are given as
\begin{align}
    z_j(t) = \sum_{u=1}^t  x_j(u) - \xi(x_j(u)), \quad
    \ell_{j}(t) = \sum_{u=1}^t l_j(u) .
\nonumber \end{align}
In round $t$, the utility of worker $j$ depends on the cumulative net income and the cumulative labor as
\begin{equation}
    r_t^j(z_j(t), \ell_j(t)) = \frac{(z_j(t))^{1-\eta}-1}{1-\eta} - \ell_j(t) \label{eq:utility}
\end{equation}
where $\eta > 0$ determines the non-linear impact of income.
An example of the utility function in~(\ref{eq:utility}) is shown in Fig.~\ref{fig:utility} for $\eta=0.3$, income $x_j(t)=10$, and a default marginal labor $\tilde{l}_j(t)=1$ at different skill levels.
It can be seen that the utility initially increases with income until a point at which the cumulative labor outweighs the benefits of income and the worker gets burnt out.
     
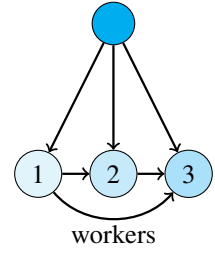
\begin{figure}[h!]
     \centering
        \begin{tikzpicture}
        \def \n {3}
        \node[draw, circle, fill=myblue, label={Socio-economic planner}, inner sep = 0.2cm] (leader) at (2,0) {};
        
        \foreach \s in {1,...,\n} {
        \pgfmathsetmacro\k{\s*10}
          \node[draw, circle, fill=myblue!\k] (\s) at (\s, -2) {\s};
        }
        \foreach \x in {1,...,\n}{
            \draw[->,line width = 0.3mm]  (leader) to node {} (\x);
        }
        
        \draw[->,line width = 0.3mm]  (1) to node {} (2);
        \draw[->, bend right=50,line width = 0.3mm]  (1) to node {} (3);
        \draw[->,line width = 0.3mm]  (2) to node {} (3);
        \node at (2,-2.8) {workers};
    \end{tikzpicture}
    
    \caption{Socio-economic setup with $4$ players among which $3$ are designated workers.}
    \label{fig:socioeconomic}
\end{figure}
\begin{figure}[h!]
    \centering
    \begin{tikzpicture}[scale=0.7, every node/.style={scale=1.5},
                        declare function={eta = 0.3;
                                          skill_1=1;
                                          house_income_1 = 10;
                                          marginal_labor_1 = 1/skill_1;
                                          cumulative_income_a1(\x) = \x * house_income_1;
                                          cumulative_labor_a1(\x) = \x * marginal_labor_1;
                                          f_a1(\x)=((cumulative_income_a1(\x))^(1-eta)-1)/(1-eta)- cumulative_labor_a1(\x);
                                          skill_2=2;
                                          house_income_2 = 10;
                                          marginal_labor_2 = 1/skill_2;
                                          cumulative_income_a2(\x) = \x * house_income_2;
                                          cumulative_labor_a2(\x) = \x * marginal_labor_2;
                                          f_a2(\x)=((cumulative_income_a2(\x))^(1-eta)-1)/(1-eta)- cumulative_labor_a2(\x);
                                          skill_3=3;
                                          house_income_3 = 10;
                                          marginal_labor_3 = 1/skill_3;
                                          cumulative_income_a3(\x) = \x * house_income_3;
                                          cumulative_labor_a3(\x) = \x * marginal_labor_3;
                                          f_a3(\x)=((cumulative_income_a3(\x))^(1-eta)-1)/(1-eta)- cumulative_labor_a3(\x);}]
        
        \begin{axis}[grid=major,domain=1:50000, xmode=log, ymin=-100, samples=300, xlabel={houses built}, ylabel={worker utility},ylabel near ticks,axis lines=left, legend pos=north west]
                \addplot[blue,thick,dotted, line width = 1.5] (\x, {f_a1(\x)});\addlegendentry{$s=1$}
                \addplot[red,thick, dashed, line width = 1.5] (\x, {f_a2(\x)});\addlegendentry{$s=2$}
                 \addplot[green,thick, line width = 1.5] (\x, {f_a3(\x)});\addlegendentry{$s=3$}
        \end{axis}
    \end{tikzpicture}
    \caption{Example of utility functions for different skill levels when $x_j(t)=10$ and $\tilde{\ell}_j(t)=1$.}
    \label{fig:utility}
\end{figure}
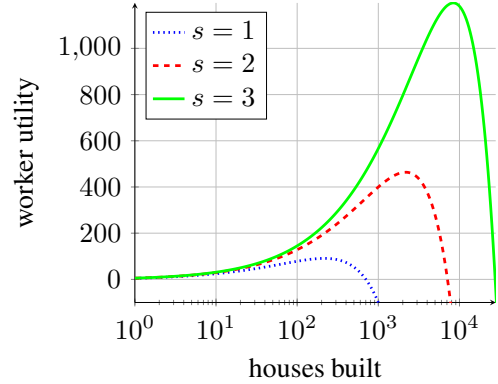
We consider bandit-rewards defined with respect to the worker utilities and the total collected tax as
\begin{align}
   &r_t(\mathbf{a}_p(t), a_1(t), \dots, a_{M}(t)) = \nonumber\\
   &\frac{1}{(M+1)} \left[\sum_{j=1}^{M} w r_t^j(z_j(t), \ell_j(t)) +  w_p\sum_{j=1}^{M} \xi(x_j(t))\right] \label{eq:socio_reward}
\end{align}
where the weights trade off worker utility for the collected tax and satisfy $Mw+ w_p = M+1$.
The individual rewards are all normalized to $[0,1]$, hence, $r_t(\mathbf{a}_p(t), a_1(t), \dots, a_{M}(t)) \in [0,1]$.

For the numerical experiment, we consider $N=2$ income brackets where the boundaries of the income brackets are $\{0, 14,\infty\}$ and the socio-economic planner chooses a marginal taxation rate from $\mathcal{R}=\{0.1, 0.3, 0.5 \}$ in each income bracket, hence, $A_p = 9$.
We consider $M=3$ workers with the same action set $\mathcal{A}$ of size $3$.
Consequently, the joint action space is of size $243$.
Furthermore, we let the skill level of the workers coincide with the worker index, i.e., $s_j=j$ for $j\in[M]$.
Simply, workers able to observe others have higher skill.
The worker actions translate to a gross marginal income and a marginal labor as $a_j(t) \rightarrow ( x_j(t), l_j(t))$ where $x_j(t)=5a_j(t)$ and $l_j(t) = a_j(t)/s_j$ for $a_j(t)\in \{1,2,3\}$.
Finally, we set $\eta=0.3$ and let $w = 1/M$ and $w_p = M$ to model a situation where the collected tax is preferred over workers' individual utility.

The joint pseudo-regret of the socio-economic simulation is illustrated in Fig.~\ref{fig:pseudoa} and Fig.~\ref{fig:pseudob} (different scales) along with the upper bound in Theorem~\ref{th:4}.
We collect $100$ realizations of the experiment and, along with the pseudo-regret $R(T)$, two standard deviations are also presented.
It can be seen that the players initially explore the action space and are able to eventually converge on an optimal strategy from a pseudo-regret perspective.
The upper bound in the figures is admittedly loose and does not exhibit the same asymptotic decay as the simulation due to different constants in the scaling law, see Fig.~\ref{fig:pseudob}.
However, it remains valuable as it provides an asymptotic no-regret guarantee for the learning algorithm.
     
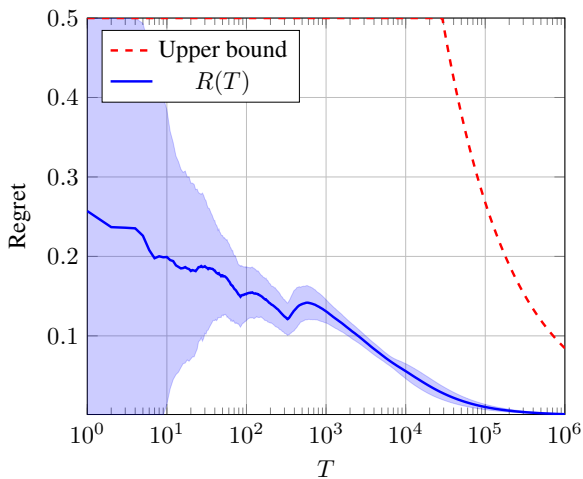
\begin{figure}[h!]
     \centering
     \resizebox{0.97\linewidth}{!}{
            \begin{tikzpicture}
             \begin{axis}[grid = major, xmode=log, ymode=linear, ymax=0.5, ymin = 1e-3, xmax = 1e6, xmin=1, xlabel= $T$, ylabel=Regret, legend pos=north west, restrict y to domain*=0:0.5]
             
             \addplot[blue, opacity=0.2, name path=A, forget plot] table [x index = {1}, y index={3}, col sep=comma] {res.csv}; 
             \addplot[blue, opacity=0.2, name path=B, forget plot] table [x index = {1}, y index={4}, col sep=comma] {res.csv}; 
               \addplot [blue, opacity=0.2,forget plot] fill between [of = A and B];
             
             \addplot[color = red, line width=1, dashed] table [x index = {1}, y index={5}, col sep=comma] {res.csv};  \addlegendentry{Upper bound}   
             \addplot[color = blue, line width=1] table [x index = {1}, y index={2}, col sep=comma] {res.csv};  \addlegendentry{$R(T)$}
             \end{axis}
            \end{tikzpicture}
            }
            \caption{Pseudo regret vs the upper bound in Theorem~\ref{th:4} (linear scale).}
        \label{fig:pseudoa}
\end{figure}

\begin{figure}[h!]
     \centering
     \resizebox{\linewidth}{!}{
            \begin{tikzpicture}
             \begin{axis}[grid = major, xmode=log, ymode=log, ymax=100, ymin = 1e-3, xmax = 1e6, xmin=10, xlabel= $T$, ylabel=Regret, legend pos=north east]
             
             \addplot[blue, opacity=0.2, name path=A, forget plot] table [x index = {1}, y index={3}, col sep=comma] {res.csv}; 
             \addplot[blue, opacity=0.2, name path=B, forget plot] table [x index = {1}, y index={4}, col sep=comma] {res.csv}; 
               \addplot [blue, opacity=0.2,forget plot] fill between [of = A and B];
             
             \addplot[color = red, line width=1, dashed] table [x index = {1}, y index={5}, col sep=comma] {res.csv};  \addlegendentry{Upper bound}   
               \addplot[color = blue, line width=1] table [x index = {1}, y index={2}, col sep=comma] {res.csv};  \addlegendentry{$R(T)$}
             \end{axis}
            \end{tikzpicture}
            }
            \caption{Pseudo regret vs the upper bound in Theorem~\ref{th:4} (log-scale).}
            \label{fig:pseudob}
    \end{figure}
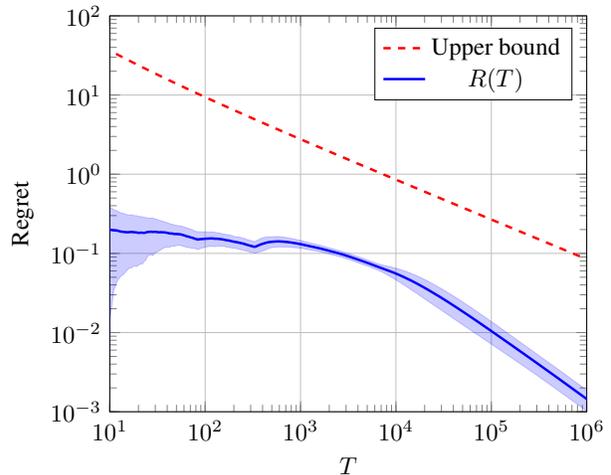

\section{Conclusion}

We have studied multiplayer games with joint bandit-rewards where players execute actions consecutively and observe the actions of the preceding players.
We introduced the notion of joint pseudo-regret and presented an algorithm that is guaranteed to achieve no-regret for adversarial bandit rewards.
A bottleneck of many multi-agent algorithms is that the complexity scales with the joint action space~\citep{Chi21} and our algorithm is no exception.
An interesting venue of further study is to find algorithms that have more benign scaling properties, see e.g.,~\citep{Chi21, Daskalakis21}.
Furthermore, recent results on correlated multi-armed bandits have demonstrated that multi-armed bandits with many arms may become significantly more feasible if one is able to exploit dependencies among arms~\cite{gupta2021correlated}.
It would be interesting to explore how the scaling of our algorithm is affected by modelling and exploiting dependencies among players.

\bibliography{main}
\bibliographystyle{icml2022}



\end{document}